\title[Universal Online Optimization in Dynamic Environments]{Universal Online Optimization in Dynamic Environments \\ via Uniclass Prediction}
\DeclareMathOperator*{\argmin}{arg\,min}
\newcommand{\yrcite}[1]{\citeyearpar{#1}}
\renewcommand{\cite}[1]{\citep{#1}}
\newtheorem{assumption}[theorem]{Assumption}
\begin{document}

\maketitle

\begin{abstract}%
Recently, several universal methods have been proposed for online convex optimization which can handle convex, strongly convex and exponentially concave cost functions simultaneously. However, most of these algorithms have been designed with \emph{static} regret minimization in mind, but this notion of regret may not be suitable for changing environments. To address this shortcoming, we propose a novel and intuitive framework for universal online optimization in dynamic environments. Unlike existing universal algorithms, our strategy does not rely on the construction of a set of experts and an accompanying meta-algorithm. Instead, we show that the problem of dynamic online optimization can be reduced to a uniclass prediction problem \cite{crammer06}. By leaving the choice of uniclass loss function in the user's hands, they are able to control and optimize dynamic regret bounds, which in turn carry over into the original problem. To the best of our knowledge, this is the first paper proposing a universal approach with state-of-the-art dynamic regret guarantees \emph{even for} general convex cost functions.
\end{abstract}

\begin{keywords}%
Universal online convex optimization, dynamic regret, uniclass prediction, Lipschitz continuity, strong convexity, smoothness%
\end{keywords}

\section{Introduction}

Online convex optimization (OCO) has become a leading framework for iterative decision-making, and is able to model various real-world problems such as online routing and spam filtering \cite{hazan22}. OCO can be viewed as a repeated game between a learner and the environment. At each round $t$, the learner chooses an action $\mathbf{x}_{t}$ from a convex set $\mathcal{X}$. Thereafter, the environment selects (possibly in an adversarial manner) a convex cost function $f_{t}:\mathcal{X} \to \mathbb{R}$ and reveals it to the learner. Subsequently, the learner incurs an instantaneous cost of $f_{t}(\mathbf{x}_{t})$.

The standard performance metric is \emph{static} regret which benchmarks the learner's cumulative cost against that of the best fixed action in hindsight:
\begin{equation}
\label{eq:sreg}
\text{S-Regret}_{f}(\mathbf{x}_{1:T})
\equiv \sum_{t=1}^{T} f_{t}(\mathbf{x}_{t}) - \min_{\mathbf{x} \in \mathcal{X}} \sum_{t=1}^{T} f_{t}(\mathbf{x}),
\end{equation}
where the notation $\mathbf{x}_{1:T}$ is defined in \cref{notation}.
In the literature, there are plenty of algorithms for minimizing static regret \cite{cesa-bianchi06, shalev-shwartz11, hazan22}.
However, many interesting real-world applications are characterized by dynamic environments \cite{zhou22}, for which static regret is not an appropriate metric, as it incentivizes static behavior. As a result, stronger notions of regret have been proposed in the literature, including dynamic regret \cite{zinkevich03} and adaptive regret \cite{hazan07b, daniely15}. Throughout this paper, we shall focus on a measure known as \emph{worst-case dynamic regret}, or dynamic regret for short\footnote{This is a special case of the concept of \emph{general dynamic regret}, given by $\sum_{t=1}^{T} f_{t}(\mathbf{x}_{t}) - \sum_{t=1}^{T} f_{t}(\mathbf{u}_{t})$, for any sequence of comparators $\mathbf{u}_{1}, \ldots, \mathbf{u}_{T} \in \mathcal{X}$ \cite{zinkevich03}.}:
\begin{equation}
\label{eq:dreg}
\text{D-Regret}_{f}(\mathbf{x}_{1:T})
\equiv \sum_{t=1}^{T} f_{t}(\mathbf{x}_{t}) - \sum_{t=1}^{T} \min_{\mathbf{x} \in \mathcal{X}} f_{t}(\mathbf{x}).
\end{equation}

There is a plethora of studies on dynamic regret minimization such as \cite{zinkevich03, hall13, besbes15, jadbabaie15, yang16, mokhtari16, zhang17, zhang18b, baby19, zhang20, zhao21}.
However, these studies can only handle \textbf{one} type of convex cost functions. Furthermore, when it comes to strongly convex or exponentially concave (abbr. exp-concave) functions, the respective moduli need to be known or estimated. Needless to say, this lack of universality hinders their applicability.

On the other hand, there are universal algorithms that attain optimal static regret \cite{bartlett07, vanerven16, wang19, wang20, zhang22} or adaptive regret \cite{zhang21} for multiple types of convex cost functions simultaneously. This line of research motivated us to wonder whether it is possible to design a single algorithm that minimizes the dynamic regret for any type of cost functions. We provide an affirmative answer by observing that the problem of dynamic regret minimization can be reduced to a uniclass prediction problem, provided the original cost functions are Lipschitz continuous. Our procedure consists of three steps, namely:
\begin{itemize}
\item First, we show that the Lipschitz continuity of the cost functions trivially implies that
\begin{equation}
\label{eq:dreg-ub}
\text{D-Regret}_{f}(\mathbf{x}_{1:T})
\leq K_{f}\sum_{t=1}^{T}\left\Vert\mathbf{x}_{t} - \Pi_{\mathcal{F}_{t}^{*}}(\mathbf{x}_{t})\right\Vert,
\end{equation}
where $K_{f}$ denotes the Lipschitz constant and
\begin{equation*}
\Pi_{\mathcal{F}_{t}^{*}}(\mathbf{x}_{t})
= \argmin_{\mathbf{x}_{t}^{*} \in \mathcal{F}_{t}^{*}} \ \left\Vert\mathbf{x}_{t} - \mathbf{x}_{t}^{*}\right\Vert,
\quad \mathcal{F}_{t}^{*} \equiv \argmin_{\mathbf{x} \in \mathcal{X}} f_{t}(\mathbf{x}).
\end{equation*}
Using $\Pi_{\mathcal{F}_{t}^{*}}(\mathbf{x}_{t})$ rather than $\mathbf{x}_{t}^{*}$ in \eqref{eq:dreg-ub} allows for the possibility of multiple minimizers.
\item Second, we apply algorithms that produce a sequence $\widehat{\mathbf{x}}_{1:T} \subset \mathcal{X}$ of \emph{surrogate actions} such that the sum on the right-hand side of \eqref{eq:dreg-ub} attains a sublinear upper bound $\widehat{B}_{T}$, i.e.
\begin{equation}
\sum_{t=1}^{T}\left\Vert\widehat{\mathbf{x}}_{t} - \Pi_{\mathcal{F}_{t}^{*}}(\widehat{\mathbf{x}}_{t})\right\Vert
\leq \widehat{B}_{T}.
\end{equation}
This step requires users to specify a uniclass loss function $\ell: \mathcal{X} \times \mathcal{X} \to \mathbb{R}_{+}$, thus offering them the opportunity to control and optimize $\widehat{B}_{T}$.
\item Finally, by submitting $\widehat{\mathbf{x}}_{t}$ and incurring an instantaneous cost of $f_{t}(\widehat{\mathbf{x}}_{t})$ on each round $t$, the aforementioned bound $\widehat{B}_{T}$ also applies to $\text{D-Regret}_{f}$, by virtue of the Lipschitz assumption.
\end{itemize}

The beauty of our approach is that by allowing users to choose the uniclass loss function, they gain full control over the dynamic regret rate. This means that by strategically picking the uniclass loss, the user can achieve state-of-the-art regret rates in her application(s) of interest, \emph{regardless} of the type of cost functions returned by the environment. This tremendous benefit comes at the relatively small expense of having to minimize $f_{t}$ at each step, for which we can readily apply off-the-shelf convex optimization techniques \cite{boyd04, nocedal06}. Additional advantages of our universal strategy include:
\begin{itemize}
\item Unlike existing universal algorithms, it does not require running a set of OCO algorithms with different configurations in parallel, thereby avoiding computational bottlenecks.
\item The ability to handle an evolving convexity structure in the cost functions, i.e. situations in which the environment alternates, possibly in a random manner, between convex, strongly convex and exp-concave functions.
\item Its agnostic nature with regard to the modulus of strong convexity or exponential concavity where applicable.
\item In theory, it can handle non-convex cost functions, as long as these are sufficiently `well-behaved' for the problem $\min_{\mathbf{x} \in \mathcal{X}} f_{t}(\mathbf{x})$ to be computable.
\end{itemize}
An apparent drawback of our framework is that it requires the entire loss function $f_{t}$ to be revealed at each stage, thereby limiting it to full-information feedback scenarios. Nevertheless, it remains a general, original and simple approach to universal dynamic optimization that merits the attention of all concerned.

\section{Related Work}

%
%
%

In this section, we briefly review related work on dynamic regret and uniclass prediction.

\subsection{Dynamic Regret}

It is well-known that in the worst case, it is impossible to achieve sublinear dynamic regret without imposing some regularity constraints on the sequence of cost functions or comparators \cite{besbes15, jadbabaie15, yang16}. The reason is because drastic fluctuations in these sequences can render the problem intractable.
There are mainly three kinds of regularity measures that have been used in the literature. Specifically:
\begin{itemize}
\item The \emph{path length}, introduced by Zinkevich \yrcite{zinkevich03} and defined as
\begin{equation}
\label{eq:path-length}
\mathcal{P}(\mathbf{u}_{1:T})
\equiv \sum_{t=2}^{T} \Vert\mathbf{u}_{t} - \mathbf{u}_{t-1}\Vert.
\end{equation}
\item The \emph{squared path length} \cite{zhang17} which, as its name suggests, takes the square of the summands in the path length:
\begin{equation}
\label{eq:sq-path-length}
\mathcal{S}(\mathbf{u}_{1:T})
\equiv \sum_{t=2}^{T} \Vert\mathbf{u}_{t} - \mathbf{u}_{t-1}\Vert^{2}.
\end{equation}
\item The \emph{variation functional} \cite{besbes15}, defined by
\begin{equation}
\mathcal{V}_{T}^{f}
\equiv \sum_{t=2}^{T} \ \sup_{\mathbf{x} \in \mathcal{X}} \ \left|f_{t}(\mathbf{x}) - f_{t-1}(\mathbf{x})\right|.
\end{equation}
\end{itemize}
When the comparators are the possibly non-unique optimal actions, we shall use the symbols
\begin{equation}
\label{eq:generalized-path-length-measures}
\mathcal{P}_{T}^{*}
\equiv \sum_{t=2}^{T}\max_{\mathbf{x} \in \mathcal{X}}\left\Vert\Pi_{\mathcal{F}_{t}^{*}}(\mathbf{x}) - \Pi_{\mathcal{F}_{t-1}^{*}}(\mathbf{x})\right\Vert
\quad \text{and} \quad
\mathcal{S}_{T}^{*}
\equiv \sum_{t=2}^{T}\max_{\mathbf{x} \in \mathcal{X}}\left\Vert\Pi_{\mathcal{F}_{t}^{*}}(\mathbf{x}) - \Pi_{\mathcal{F}_{t-1}^{*}}(\mathbf{x})\right\Vert^{2}
\end{equation}
to represent the path-length variation, where $\mathcal{F}_{t}^{*}$ denotes the set of minimizers of $f_{t}$ and $\Pi_{\mathcal{X}}(\cdot)$ is the Euclidean projection operator (see \cref{notation}). When $\mathcal{F}_{t}^{*}$ is a singleton, i.e. when $f_{t}$ admits a unique minimizer, denoted $\mathbf{x}_{t}^{*}$, the quantities in \cref{eq:generalized-path-length-measures} boil down to $\mathcal{P}_{T}^{*} = \sum_{t=2}^{T} \Vert\mathbf{x}_{t}^{*} - \mathbf{x}_{t-1}^{*}\Vert$ and $\mathcal{S}_{T}^{*} = \sum_{t=2}^{T} \Vert\mathbf{x}_{t}^{*} - \mathbf{x}_{t-1}^{*}\Vert^{2}$, respectively.
As shown in \cite{jadbabaie15}, the aforementioned regularity measures are generally incomparable and favored in different scenarios. As discussed in \cite{zhang17}, the squared path length could be much smaller than the plain path length. For example, when $\Vert\mathbf{x}_{t}^{*} - \mathbf{x}_{t-1}^{*}\Vert = \Omega(1/\sqrt{T})$ for all $t \in [T]$, we have $\mathcal{P}_{T}^{*} = \Omega(\sqrt{T})$ but $\mathcal{S}_{T}^{*} = \Omega(1)$.

When the path length $\mathcal{P}_{T}^{*}$ is known in advance, the dynamic regret of online gradient descent is at most $O(\sqrt{T(1+\mathcal{P}_{T}^{*})})$ for general convex functions \cite{zinkevich03, yang16}. Yang et al. \yrcite{yang16} further show that an $O(\mathcal{P}_{T}^{*})$ rate is attainable for convex and smooth functions, provided the set $\mathcal{F}_{t}^{*}$ of minimizers lies in the interior of the feasible set $\mathcal{X}$.
For strongly convex and smooth functions, Mokhtari et al. \yrcite{mokhtari16} show that an $O(\mathcal{P}_{T}^{*})$ bound is achievable without knowing $\mathcal{P}_{T}^{*}$. Zhang et al. \yrcite{zhang17} propose to repeatedly apply gradient descent on each round, and demonstrate this reduces the $O(\mathcal{P}_{T}^{*})$ bound to $O(\min\{\mathcal{P}_{T}^{*},\ \mathcal{S}_{T}^{*}\})$, provided the gradients at the minimizers $\mathbf{x}_{t}^{*}$ are small. This dynamic regret guarantee has been recently enhanced to $O(\min\{\mathcal{P}_{T}^{*},\ \mathcal{S}_{T}^{*},\ \mathcal{V}_{T}^{f}\})$ via an improved analysis in \cite{zhao21}.

The aforementioned results use the path or squared path length as the regularity metric. In another line of research, researchers rely on the variation with respect to the function values. Specifically, Besbes et al. \yrcite{besbes15} show that OGD with a restarting strategy attains an $O(T^{2/3}(\mathcal{V}_{T}^{f})^{1/3})$ regret when the variation functional $\mathcal{V}_{T}^{f}$ is available ahead of time. This bound is improved to $O(T^{1/3}(\mathcal{V}_{T}^{f})^{2/3})$ for the one-dimensional squared loss in \cite{baby19}.

Although there exist abundant algorithms and theories for dynamic regret minimization, how to choose them in practice is a non-trivial task. To ensure good performance, we not only need to know the type of cost functions, but also estimate the modulus of strong convexity or exponential concavity where applicable. This reliance on human discretion limits the application of dynamic OCO techniques to real-world problems, and motivates the development of universal methods.

\subsection{Uniclass Prediction}

Uniclass prediction \cite{crammer06, crammer03} involves predicting a sequence $\mathbf{y}_{1:T}$ of vectors, where each $\mathbf{y}_{t} \in \mathbb{R}^{n}$. This task is fundamentally different from classification and regression, since predictions $\widehat{\mathbf{y}}_{t}$ must be made without observing any external input (e.g. a feature vector). Crammer et al. \yrcite{crammer06, crammer03} measure loss using the following $\epsilon$-insensitive loss function:
\begin{equation}
\label{eq:ilf}
\ell_{\epsilon}(\mathbf{y}, \widehat{\mathbf{y}})
= \begin{cases}
	0 & \text{if } \Vert\mathbf{y} - \widehat{\mathbf{y}}\Vert \leq \epsilon \\
	\Vert\mathbf{y} - \widehat{\mathbf{y}}\Vert - \epsilon & \text{otherwise}.
  \end{cases}
\end{equation}
Subsequently, they derive the uniclass predictions by solving the constrained minimization problem
\begin{equation}
\widehat{\mathbf{y}}_{t+1} = \argmin_{\widehat{\mathbf{y}} \in \mathbb{R}^{n}} \ \frac{1}{2}\Vert\hat{\mathbf{y}} - \widehat{\mathbf{y}}_{t}\Vert^{2}
\quad \text{s.t.} \quad \ell_{\epsilon}(\mathbf{y}_{t}, \widehat{\mathbf{y}}) = 0,
\end{equation}
which leads to the following passive-aggressive (PA) updates\footnote{Interestingly, \cref{eq:uniclass-updates} is a special case of the online gradient descent algorithm \cite{zinkevich03}, with a learning-rate schedule given by $\eta_{t} = \ell_{\epsilon}(\mathbf{y}_{t}, \widehat{\mathbf{y}}_{t})$ for all $t$.}:
\begin{equation}
\label{eq:uniclass-updates}
\widehat{\mathbf{y}}_{t+1}
= \widehat{\mathbf{y}}_{t} \ + \ \ell_{\epsilon}(\mathbf{y}_{t}, \widehat{\mathbf{y}}_{t})\frac{\mathbf{y}_{t} - \widehat{\mathbf{y}}_{t}}{\Vert\mathbf{y}_{t} - \widehat{\mathbf{y}}_{t}\Vert}.
\end{equation}
Lastly, Crammer et al. \yrcite{crammer06, crammer03} establish a regret bound for their uniclass PA algorithm by assuming that there exist $\widehat{\mathbf{y}}^{*}$ and $\epsilon^{*}$ such that $\ell_{\epsilon^{*}}(\mathbf{y}_{t}, \widehat{\mathbf{y}}^{*}) = 0$ for all $t$, i.e. they focus on static rather than dynamic regret. Still, their work was instrumental in the development of our universal framework, albeit we shall extend the meaning of uniclass prediction to include target sequences $\mathbf{y}_{1:T}$ within a convex subset $\mathcal{X} \subseteq \mathbb{R}^{n}$.

\section{Preliminaries}
\label{preliminaries}

\subsection{Notation}

Vectors are denoted by lower-case bold Roman letters such as $\mathbf{x}$, and all vectors are assumed to be column vectors. Uppercase bold Roman letters, such as $\mathbf{M}$, stand for matrices. As for other mathematical symbols used herein, we find it convenient to list them in \cref{notation}.

\begin{table}[t]
\caption{Mathematical notation.}
\label{notation}
\vskip 0.15in
\begin{center}
\begin{tabular*}{\linewidth}{@{\extracolsep{\fill}} ll}
\toprule
\textsc{Symbol} & \textsc{Meaning} \\
\midrule
$[n]$ & $\{1, 2, \ldots, n\}$ \\
$\mathbf{x}_{1:T}$ & the sequence $\{\mathbf{x}_{t}\}_{t=1}^{T}$ \\
$\nabla$ or $\nabla_{\mathbf{x}}$ & gradient (w.r.t. $\mathbf{x}$) \\
$\mathbf{x}^{\intercal}$ & the transpose of vector $\mathbf{x}$ \\
$\Vert\mathbf{x}\Vert$ & the Euclidean norm of vector $\mathbf{x}$ \\
$\Pi_{\mathcal{X}}(\mathbf{y})$ & the Euclidean projection, i.e. $\argmin_{\mathbf{x} \in \mathcal{X}}\Vert\mathbf{x} - \mathbf{y}\Vert$ \\
\bottomrule
\end{tabular*}
\end{center}
\vskip -0.1in
\end{table}

\subsection{Definitions}

To make our paper self-contained, we now provide some definitions that will be used throughout.

\begin{definition}[\textbf{Arg min}]
\label{def:argmin}
Consider a function $f:\mathcal{X} \to \mathbb{R}$. The \emph{argument of the minimum}, or \emph{arg min} for short, of $f$ over $\mathcal{X}$ is defined by
\begin{equation}
\label{eq:argmin}
\argmin_{\mathbf{x} \in \mathcal{X}} f(\mathbf{x})
\equiv \Big\{\mathbf{x} \in \mathcal{X} : f(\mathbf{x}) \leq f(\mathbf{y}),\ \forall \mathbf{y} \in \mathcal{X}\Big\},
\end{equation}
i.e. all the points $\mathbf{x}$ in the feasible set $\mathcal{X}$ for which $f(\mathbf{x})$ attains its smallest value.
\end{definition}

\begin{definition}[\textbf{Strong convexity}]
\label{def:strong-convexity}
A function $f:\mathcal{X} \to \mathbb{R}$ is \emph{$\lambda$-strongly convex} if
\begin{equation}
\label{eq:strong-convexity1}
f(\mathbf{y}) \geq f(\mathbf{x}) + \nabla f(\mathbf{x})^{\intercal}(\mathbf{y}-\mathbf{x}) + \frac{\lambda}{2}\left\Vert\mathbf{y} - \mathbf{x}\right\Vert^{2}, \quad \forall \mathbf{x}, \mathbf{y} \in \mathcal{X}.
\end{equation}
\end{definition}

\begin{definition}[\textbf{Smoothness}]
\label{def:smoothness}
A function $f:\mathcal{X} \to \mathbb{R}$ is \emph{$L$-smooth} if
\begin{equation}
\label{eq:smoothness1}
f(\mathbf{y}) \leq f(\mathbf{x}) + \nabla f(\mathbf{x})^{\intercal}(\mathbf{y}-\mathbf{x}) + \frac{L}{2}\left\Vert\mathbf{y} - \mathbf{x}\right\Vert^{2}, \quad \forall \mathbf{x}, \mathbf{y} \in \mathcal{X}.
\end{equation}
\end{definition}

\subsection{Assumptions}

In developing our universal strategy, we assume the following assumptions about the cost-function sequence $f_{1:T}$ hold.

\begin{assumption}[\textbf{Lipschitz continuity}]
The cost functions $f_{t}:\mathcal{X} \to \mathbb{R}$ are \emph{Lipschitz continuous} over the set $\mathcal{X}$ with constant $K_{f} < \infty$, i.e.
\begin{equation}
\left|f_{t}(\mathbf{x}) - f_{t}(\mathbf{y})\right| \leq K_{f}\left\Vert\mathbf{x} - \mathbf{y}\right\Vert,
\end{equation}
for any $\mathbf{x}, \mathbf{y} \in \mathcal{X}$ and $t \in [T]$.
\label{ass:lipschitz}
\end{assumption}
The following assumption will only be required to establish a $O(\min\{\mathcal{P}_{T}^{*},\ \mathcal{S}_{T}^{*}\})$ bound on the uniclass online multiple gradient descent algorithm (see \cref{alg:omgd} and \cref{omgd}).
\begin{assumption}[\textbf{Smoothness}]
The cost functions $f_{t}:\mathcal{X} \to \mathbb{R}$ are \emph{smooth} over the set $\mathcal{X}$ with constant $L_{f}$, i.e.
\begin{equation}
f_{t}(\mathbf{y}) \leq f_{t}(\mathbf{x}) + \nabla f_{t}(\mathbf{x})^{\intercal}(\mathbf{y}-\mathbf{x}) + \frac{L_{f}}{2}\left\Vert\mathbf{y} - \mathbf{x}\right\Vert^{2},
\end{equation}
for any $\mathbf{x}, \mathbf{y} \in \mathcal{X}$ and $t \in [T]$.
\label{ass:smoothness}
\end{assumption}

\section{Our Universal Strategy}
\label{universal}

\subsection{Motivation}
\label{motivation}

We begin by noting that, for any given $t \in [T]$, Assumption \ref{ass:lipschitz} trivially implies that
\begin{equation}
\label{eq:lipschitz-corollary}
f_{t}(\mathbf{x}_{t}) - \min_{\mathbf{x} \in \mathcal{X}} f_{t}(\mathbf{x})
= f_{t}(\mathbf{x}_{t}) - f_{t}\left(\Pi_{\mathcal{F}_{t}^{*}}(\mathbf{x}_{t})\right)
\leq K_{f}\left\Vert\mathbf{x}_{t} - \Pi_{\mathcal{F}_{t}^{*}}(\mathbf{x}_{t})\right\Vert,
\end{equation}
where $\Pi_{\mathcal{X}}(\cdot)$ denotes the Euclidean projection operator (see \cref{notation}) and $\mathcal{F}_{t}^{*} \equiv \argmin_{\mathbf{x} \in \mathcal{X}} f_{t}(\mathbf{x})$ (see Definition \ref{def:argmin}).
Summing the left- and rightmost sides of \eqref{eq:lipschitz-corollary} over $t \in [T]$ yields
\begin{equation}
\label{eq:main}
\text{D-Regret}_{f}(\mathbf{x}_{1:T})
= \sum_{t=1}^{T} f_{t}(\mathbf{x}_{t}) - \sum_{t=1}^{T} \min_{\mathbf{x} \in \mathcal{X}} f_{t}(\mathbf{x})
\leq K_{f}\sum_{t=1}^{T}\left\Vert\mathbf{x}_{t} - \Pi_{\mathcal{F}_{t}^{*}}(\mathbf{x}_{t})\right\Vert.
\end{equation}
Inequality \eqref{eq:main} forms the basis for our universal approach. It essentially allows us to divert our focus from traditional algorithms bounding $\text{D-Regret}_{f}(\mathbf{x}_{1:T})$ onto algorithms generating a sequence $\widehat{\mathbf{x}}_{1:T} \subset \mathcal{X}$ of \emph{surrogate actions} that satisfy
\begin{equation}
\label{eq:surrogate-ub}
\sum_{t=1}^{T}\left\Vert\widehat{\mathbf{x}}_{t} - \Pi_{\mathcal{F}_{t}^{*}}(\widehat{\mathbf{x}}_{t})\right\Vert \leq \widehat{B}_{T},
\end{equation}
for some scalar $\widehat{B}_{T} > 0$. Following Crammer et al. \yrcite{crammer03, crammer06}, we decided to name these surrogate algorithms \emph{dynamic uniclass prediction algorithms}. Once such an algorithm has been identified, we deploy its actions $\widehat{\mathbf{x}}_{t}$ on each round $t$ in the original problem, which by virtue of Equations \eqref{eq:main} and \eqref{eq:surrogate-ub} leads to
\begin{equation}
\label{eq:main2}
\text{D-Regret}_{f}(\widehat{\mathbf{x}}_{1:T})
= \sum_{t=1}^{T} f_{t}(\widehat{\mathbf{x}}_{t}) - \sum_{t=1}^{T} \min_{\mathbf{x} \in \mathcal{X}} f_{t}(\mathbf{x})
\leq K_{f}\sum_{t=1}^{T}\left\Vert\widehat{\mathbf{x}}_{t} - \Pi_{\mathcal{F}_{t}^{*}}(\widehat{\mathbf{x}}_{t})\right\Vert
\leq K_{f}\widehat{B}_{T},
\end{equation}
meaning the surrogate bound $\widehat{B}_{T}$ naturally carries over into the dynamic regret of the original cost functions $f_{t}$. Remarkably, this property is valid \emph{regardless} of the type of cost functions: it does not matter whether these are convex, strongly convex, exp-concave, or even alternate between those types across rounds.

To find out what characteristics our dynamic uniclass algorithm should possess, we take a closer look at the sum $\sum_{t=1}^{T}\Vert\widehat{\mathbf{x}}_{t} - \Pi_{\mathcal{F}_{t}^{*}}(\widehat{\mathbf{x}}_{t})\Vert$:
\begin{align}
& \sum_{t=1}^{T}\left\Vert\widehat{\mathbf{x}}_{t} - \Pi_{\mathcal{F}_{t}^{*}}(\widehat{\mathbf{x}}_{t})\right\Vert
= \left\Vert\widehat{\mathbf{x}}_{1} - \Pi_{\mathcal{F}_{1}^{*}}(\widehat{\mathbf{x}}_{1})\right\Vert
  + \sum_{t=2}^{T}\left\Vert\widehat{\mathbf{x}}_{t} - \Pi_{\mathcal{F}_{t}^{*}}(\widehat{\mathbf{x}}_{t})\right\Vert \\ \nonumber
&= \left\Vert\widehat{\mathbf{x}}_{1} - \Pi_{\mathcal{F}_{1}^{*}}(\widehat{\mathbf{x}}_{1})\right\Vert
   + \sum_{t=2}^{T}\left\Vert\left(\widehat{\mathbf{x}}_{t} - \Pi_{\mathcal{F}_{t-1}^{*}}(\widehat{\mathbf{x}}_{t})\right) + \left(\Pi_{\mathcal{F}_{t-1}^{*}}(\widehat{\mathbf{x}}_{t}) - \Pi_{\mathcal{F}_{t}^{*}}(\widehat{\mathbf{x}}_{t})\right)\right\Vert \\
&\leq \left\Vert\widehat{\mathbf{x}}_{1} - \Pi_{\mathcal{F}_{1}^{*}}(\widehat{\mathbf{x}}_{1})\right\Vert
	  + \sum_{t=2}^{T}\left\Vert\widehat{\mathbf{x}}_{t} - \Pi_{\mathcal{F}_{t-1}^{*}}(\widehat{\mathbf{x}}_{t})\right\Vert
	  + \sum_{t=2}^{T}\left\Vert\Pi_{\mathcal{F}_{t}^{*}}(\widehat{\mathbf{x}}_{t}) - \Pi_{\mathcal{F}_{t-1}^{*}}(\widehat{\mathbf{x}}_{t})\right\Vert, \label{eq:aggerr1}
\end{align}
where \eqref{eq:aggerr1} is a direct consequence of the triangle inequality. The trick is then to somehow make the expression $\rho\sum_{t=1}^{T}\Vert\widehat{\mathbf{x}}_{t} - \Pi_{\mathcal{F}_{t}^{*}}(\widehat{\mathbf{x}}_{t})\Vert$ appear on the right-hand side of \eqref{eq:aggerr1}, where $\rho < 1$. This can be achieved provided the surrogate action sequence $\widehat{\mathbf{x}}_{1:T}$ satisfies the relation
\begin{equation}
\label{eq:contraction}
\left\Vert\widehat{\mathbf{x}}_{t} - \Pi_{\mathcal{F}_{t-1}^{*}}(\widehat{\mathbf{x}}_{t})\right\Vert
\leq \rho\left\Vert\widehat{\mathbf{x}}_{t-1} - \Pi_{\mathcal{F}_{t-1}^{*}}(\widehat{\mathbf{x}}_{t-1})\right\Vert,
\quad t = 2, 3, \ldots, T.
\end{equation}
If \eqref{eq:contraction} is true indeed, it is possible to bound \eqref{eq:aggerr1} from above as follows:
\begin{align}
\label{eq:aggerr2}
& \sum_{t=1}^{T}\left\Vert\widehat{\mathbf{x}}_{t} - \Pi_{\mathcal{F}_{t}^{*}}(\widehat{\mathbf{x}}_{t})\right\Vert \\ \nonumber
&\leq \left\Vert\widehat{\mathbf{x}}_{1} - \Pi_{\mathcal{F}_{1}^{*}}(\widehat{\mathbf{x}}_{1})\right\Vert
	  + \rho\sum_{t=2}^{T}\left\Vert\widehat{\mathbf{x}}_{t-1} - \Pi_{\mathcal{F}_{t-1}^{*}}(\widehat{\mathbf{x}}_{t-1})\right\Vert
	  + \sum_{t=2}^{T}\left\Vert\Pi_{\mathcal{F}_{t}^{*}}(\widehat{\mathbf{x}}_{t}) - \Pi_{\mathcal{F}_{t-1}^{*}}(\widehat{\mathbf{x}}_{t})\right\Vert \\ \nonumber
&\leq \left\Vert\widehat{\mathbf{x}}_{1} - \Pi_{\mathcal{F}_{1}^{*}}(\widehat{\mathbf{x}}_{1})\right\Vert
	  + \rho\sum_{t=1}^{T}\left\Vert\widehat{\mathbf{x}}_{t} - \Pi_{\mathcal{F}_{t}^{*}}(\widehat{\mathbf{x}}_{t})\right\Vert
	  + \mathcal{P}_{T}^{*},
\end{align}
where $\mathcal{P}_{T}^{*}$ has been defined in \eqref{eq:generalized-path-length-measures}.
Regrouping terms, we conclude that, if \eqref{eq:contraction} holds, the aggregate error $\sum_{t=1}^{T}\Vert\widehat{\mathbf{x}}_{t} - \Pi_{\mathcal{F}_{t}^{*}}(\widehat{\mathbf{x}}_{t})\Vert$ will be bounded above in the following manner:
\begin{equation}
\label{eq:aggerr3}
\sum_{t=1}^{T}\left\Vert\widehat{\mathbf{x}}_{t} - \Pi_{\mathcal{F}_{t}^{*}}(\widehat{\mathbf{x}}_{t})\right\Vert
\leq \frac{\mathcal{P}_{T}^{*}}{1-\rho} + \frac{\left\Vert\widehat{\mathbf{x}}_{1} - \Pi_{\mathcal{F}_{1}^{*}}(\widehat{\mathbf{x}}_{1})\right\Vert}{1-\rho}
\equiv \widehat{B}_{T}.
\end{equation}
Combining this result with \eqref{eq:main2} gives
\begin{equation}
\label{eq:regret-bound}
\text{D-Regret}_{f}(\widehat{\mathbf{x}}_{1:T})
\leq \frac{K_{f}\mathcal{P}_{T}^{*}}{1-\rho} + \frac{K_{f}\left\Vert\widehat{\mathbf{x}}_{1} - \Pi_{\mathcal{F}_{1}^{*}}(\widehat{\mathbf{x}}_{1})\right\Vert}{1-\rho}
= O(\mathcal{P}_{T}^{*}).
\end{equation}

To summarize, when we enact the surrogate action sequence $\widehat{\mathbf{x}}_{1:T} \subset \mathcal{X}$ prescribed by a dynamic uniclass algorithm obeying \eqref{eq:contraction}, the dynamic regret of the original cost functions $f_{t}$ does not grow any faster than $\mathcal{P}_{T}^{*}$, \emph{no matter} what the type of these functions is. In particular, the regret rate of $O(\mathcal{P}_{T}^{*})$ can be attained \emph{even for} general convex cost functions. This represents a major improvement over the works by, among others, Mokhtari et al. \yrcite{mokhtari16}, wherein all cost functions are required to be strongly convex and smooth for the rate $O(\mathcal{P}_{T}^{*})$ to be feasible.
The question then arises -- which uniclass algorithms and loss functions satisfy the relation in \eqref{eq:contraction}? In the next subsection, we show that online gradient descent, equipped with a sequence of strongly convex and smooth uniclass loss functions, is a candidate for this purpose.

\subsection{Online Gradient Descent}

Before presenting the online gradient descent (OGD) method \cite{zinkevich03} in the context of uniclass prediction, we discuss the construction of our time-indexed uniclass loss function, denoted $\ell_{t}$. The latter measures the loss incurred in assuming the chosen action $\widehat{\mathbf{x}}_{t}$ belongs to the set $\mathcal{F}_{t}^{*} \equiv \argmin_{\mathbf{x} \in \mathcal{X}} f_{t}(\mathbf{x})$ of minimizers of $f_{t}$ over $\mathcal{X}$. Formally, for all $t \in [T]$,
\begin{align}
\label{eq:generic-uniclass-loss}
\ell_{t} : & \ \mathcal{X} \to \mathbb{R}_{+} \\ \nonumber
& \ \mathbf{x} \mapsto \ell_{t}(\mathbf{x}) \equiv \ell\left(\mathbf{x},\ \Pi_{\mathcal{F}_{t}^{*}}(\widehat{\mathbf{x}}_{t})\right),
\end{align}
for some \textbf{user-specified} loss function $\ell : \mathcal{X} \times \mathcal{X} \to \mathbb{R}_{+}$.

The OGD algorithm for uniclass prediction starts from any $\widehat{\mathbf{x}}_{1} \in \mathcal{X}$ and performs the update
\begin{equation}
\label{eq:ogd-update}
\widehat{\mathbf{x}}_{t+1}
= \Pi_{\mathcal{X}}\left[\widehat{\mathbf{x}}_{t} - \eta \nabla \ell_{t}(\widehat{\mathbf{x}}_{t})\right],
\quad t \in [T-1],
\end{equation}
on each round $t$, after obviously computing the quantities relevant for the evaluation of the gradient $\nabla\ell_{t}(\widehat{\mathbf{x}}_{t})$, namely $\mathcal{F}_{t}^{*}$ and $\Pi_{\mathcal{F}_{t}^{*}}(\widehat{\mathbf{x}}_{t})$. The procedure is detailed in \cref{alg:ogd}.
\begin{algorithm}[t]
   \caption{Online Gradient Descent for Uniclass Prediction}
   \label{alg:ogd}
\begin{algorithmic}[1]
   \STATE {\bfseries Input:}
   \begin{itemize}
   \item closed and convex set $\mathcal{X} \subseteq \mathbb{R}^{n}$
   \item uniclass loss function $\ell$ as per \cref{eq:generic-uniclass-loss}
   \item learning rate $\eta > 0$
   \end{itemize}
   \STATE {\bfseries Initialize:} Pick $\widehat{\mathbf{x}}_{1}$ arbitrarily in $\mathcal{X}$
   \FOR{$t=1$ {\bfseries to} $T-1$}
   \STATE Play $\widehat{\mathbf{x}}_{t}$
   \STATE Observe the cost function $f_{t}:\mathcal{X} \to \mathbb{R}$ and compute the quantities
   \begin{equation*}
   \mathcal{F}_{t}^{*} = \argmin_{\mathbf{x} \in \mathcal{X}} \ f_{t}(\mathbf{x}), \quad
   \Pi_{\mathcal{F}_{t}^{*}}(\widehat{\mathbf{x}}_{t}) = \argmin_{\mathbf{x} \in \mathcal{F}_{t}^{*}} \ \left\Vert\mathbf{x} - \widehat{\mathbf{x}}_{t}\right\Vert
   \end{equation*}
   \STATE Update $\widehat{\mathbf{x}}_{t}$ as follows:
   \begin{equation*}
   \widehat{\mathbf{x}}_{t+1}
   = \Pi_{\mathcal{X}}\left[\widehat{\mathbf{x}}_{t} - \eta \nabla \ell_{t}(\widehat{\mathbf{x}}_{t})\right]
   \end{equation*}
   \ENDFOR
\end{algorithmic}
\end{algorithm}
Our main result on the regret bound of \cref{alg:ogd} is derived from the following lemma that bounds the distance $\Vert\widehat{\mathbf{x}}_{t+1} - \Pi_{\mathcal{F}_{t}^{*}}(\widehat{\mathbf{x}}_{t+1})\Vert$ in terms of $\Vert\widehat{\mathbf{x}}_{t} - \Pi_{\mathcal{F}_{t}^{*}}(\widehat{\mathbf{x}}_{t})\Vert$.
\begin{lemma}
\label{main-result}
Consider the online gradient descent method for uniclass prediction, outlined in \cref{alg:ogd}. Suppose the following conditions hold for any $t \in [T]$:
\begin{enumerate}
\item the uniclass loss functions $\ell_{t}:\mathcal{X} \to \mathbb{R}_{+}$ are $\lambda_{\ell}$-strongly convex (see Definition \ref{def:strong-convexity}) and
\begin{equation}
\mathcal{L}_{t}^{*}
\equiv \argmin_{\mathbf{x} \in \mathcal{X}} \ \ell_{t}(\mathbf{x})
= \Big\{\Pi_{\mathcal{F}_{t}^{*}}(\widehat{\mathbf{x}}_{t})\Big\}.
\end{equation}
\item the uniclass loss functions $\ell_{t}:\mathcal{X} \to \mathbb{R}_{+}$ are $L_{\ell}$-smooth (see Definition \ref{def:smoothness});
\item the learning rate $\eta$ is chosen such that $\eta \leq 1/L_{\ell}$.
\end{enumerate}
Then, the sequence of actions $\widehat{\mathbf{x}}_{1:T}$ generated by \cref{alg:ogd} satisfies the relation described in \eqref{eq:contraction}, i.e.
\begin{equation}
\label{eq:contraction-ogd}
\left\Vert\widehat{\mathbf{x}}_{t+1} - \Pi_{\mathcal{F}_{t}^{*}}(\widehat{\mathbf{x}}_{t+1})\right\Vert
\leq \rho\left\Vert\widehat{\mathbf{x}}_{t} - \Pi_{\mathcal{F}_{t}^{*}}(\widehat{\mathbf{x}}_{t})\right\Vert,
\quad t \in [T-1],
\end{equation}
with $\rho = \sqrt{1 - \frac{\lambda_{\ell}}{\lambda_{\ell} \ + \ \eta^{-1}}}$.
\end{lemma}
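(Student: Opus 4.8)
The plan is to reduce the claimed contraction to a statement about a single projected-gradient step on the fixed loss $\ell_t$, and then to show that step contracts toward the minimizer of $\ell_t$ with the stated factor. Throughout, fix $t \in [T-1]$ and write $\mathbf{m}_{t} \equiv \Pi_{\mathcal{F}_{t}^{*}}(\widehat{\mathbf{x}}_{t})$; by Condition~1, $\mathbf{m}_t$ is the \emph{unique} minimizer of $\ell_t$ over $\mathcal{X}$ (i.e. $\mathcal{L}_t^* = \{\mathbf{m}_t\}$), and in particular $\mathbf{m}_t \in \mathcal{F}_t^*$. First I would dispose of the left-hand side of \eqref{eq:contraction-ogd}: since $\Pi_{\mathcal{F}_{t}^{*}}(\widehat{\mathbf{x}}_{t+1})$ is by definition the closest point of $\mathcal{F}_t^*$ to $\widehat{\mathbf{x}}_{t+1}$ and $\mathbf{m}_t \in \mathcal{F}_t^*$, the projection is no farther than $\mathbf{m}_t$, so that
\[
\left\Vert\widehat{\mathbf{x}}_{t+1} - \Pi_{\mathcal{F}_{t}^{*}}(\widehat{\mathbf{x}}_{t+1})\right\Vert \le \left\Vert\widehat{\mathbf{x}}_{t+1} - \mathbf{m}_t\right\Vert .
\]
As the right-hand side of \eqref{eq:contraction-ogd} equals $\rho\Vert\widehat{\mathbf{x}}_t - \mathbf{m}_t\Vert$, it suffices to prove the one-step contraction $\Vert\widehat{\mathbf{x}}_{t+1} - \mathbf{m}_t\Vert \le \rho\Vert\widehat{\mathbf{x}}_t - \mathbf{m}_t\Vert$, i.e. that the update \eqref{eq:ogd-update} moves $\widehat{\mathbf{x}}_t$ a factor $\rho$ closer to the minimizer $\mathbf{m}_t$ of $\ell_t$.

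Next I would sandwich the suboptimality $\ell_t(\widehat{\mathbf{x}}_{t+1}) - \ell_t(\mathbf{m}_t)$ between two bounds. For the lower bound, I combine the first-order optimality condition $\langle\nabla\ell_t(\mathbf{m}_t), \mathbf{x} - \mathbf{m}_t\rangle \ge 0$ (valid for all $\mathbf{x} \in \mathcal{X}$ because $\mathbf{m}_t$ minimizes $\ell_t$ over $\mathcal{X}$) with $\lambda_\ell$-strong convexity to get $\ell_t(\widehat{\mathbf{x}}_{t+1}) - \ell_t(\mathbf{m}_t) \ge \frac{\lambda_\ell}{2}\Vert\widehat{\mathbf{x}}_{t+1} - \mathbf{m}_t\Vert^2$. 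For the upper bound, I invoke the standard prox-gradient descent inequality: introducing the gradient mapping $\mathbf{g}_t \equiv \eta^{-1}(\widehat{\mathbf{x}}_t - \widehat{\mathbf{x}}_{t+1})$, the descent lemma (Condition~2 together with Condition~3, $\eta \le 1/L_\ell$), convexity of $\ell_t$, and the variational inequality characterizing the projection $\widehat{\mathbf{x}}_{t+1}$ jointly yield $\ell_t(\widehat{\mathbf{x}}_{t+1}) - \ell_t(\mathbf{m}_t) \le \langle\mathbf{g}_t, \widehat{\mathbf{x}}_t - \mathbf{m}_t\rangle - \frac{\eta}{2}\Vert\mathbf{g}_t\Vert^2$.

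Finally I would chain the two bounds and clear the gradient-mapping terms. Expanding $\Vert\widehat{\mathbf{x}}_{t+1} - \mathbf{m}_t\Vert^2 = \Vert\widehat{\mathbf{x}}_t - \mathbf{m}_t\Vert^2 - 2\eta\langle\mathbf{g}_t, \widehat{\mathbf{x}}_t - \mathbf{m}_t\rangle + \eta^2\Vert\mathbf{g}_t\Vert^2$ reveals that the scalar $q \equiv \langle\mathbf{g}_t, \widehat{\mathbf{x}}_t - \mathbf{m}_t\rangle - \frac{\eta}{2}\Vert\mathbf{g}_t\Vert^2$ factors out of both bounds: regrouping the chained inequality gives $\frac{\lambda_\ell}{2}\Vert\widehat{\mathbf{x}}_t - \mathbf{m}_t\Vert^2 \le (1 + \eta\lambda_\ell)\,q$, while the expansion reads $\Vert\widehat{\mathbf{x}}_{t+1} - \mathbf{m}_t\Vert^2 = \Vert\widehat{\mathbf{x}}_t - \mathbf{m}_t\Vert^2 - 2\eta q$. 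Substituting the former into the latter eliminates the $\mathbf{g}_t$-dependence entirely and leaves $\Vert\widehat{\mathbf{x}}_{t+1} - \mathbf{m}_t\Vert^2 \le \frac{1}{1+\eta\lambda_\ell}\Vert\widehat{\mathbf{x}}_t - \mathbf{m}_t\Vert^2$. Since $\frac{1}{1+\eta\lambda_\ell} = 1 - \frac{\lambda_\ell}{\lambda_\ell + \eta^{-1}} = \rho^2$, taking square roots and combining with the reduction of the first paragraph completes the argument.

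I expect the second step to be the main obstacle, precisely because the minimizer $\mathbf{m}_t$ may lie on the boundary of $\mathcal{X}$, so $\nabla\ell_t(\mathbf{m}_t)$ need not vanish. This rules out the naive argument based on non-expansiveness of the projected-gradient map about a stationary point and forces the prox-gradient/variational-inequality route above. Moreover, recovering the \emph{exact} factor $1/\sqrt{1+\eta\lambda_\ell}$ (rather than a looser co-coercivity estimate such as $\sqrt{1 - 2\eta\lambda_\ell + \eta^2 L_\ell^2}$) depends on pairing the strong-convexity lower bound with the descent upper bound so that the common term $q$ cancels cleanly.
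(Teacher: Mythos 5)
Your proof is correct and takes essentially the same route as the paper's: both sandwich the suboptimality $\ell_{t}(\widehat{\mathbf{x}}_{t+1}) - \ell_{t}(\mathbf{m}_{t})$ between the strong-convexity lower bound and a projected-gradient upper bound derived from smoothness ($\eta \leq 1/L_{\ell}$), convexity, and the optimality property of the projection step, and your gradient-mapping quantity $q$ is identically equal to the paper's $\frac{1}{2\eta}\left\Vert\widehat{\mathbf{x}}_{t} - \mathbf{m}_{t}\right\Vert^{2} - \frac{1}{2\eta}\left\Vert\widehat{\mathbf{x}}_{t+1} - \mathbf{m}_{t}\right\Vert^{2}$, so the same algebra produces the same factor $\rho = 1/\sqrt{1+\eta\lambda_{\ell}}$. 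The only (cosmetic) difference is that you invoke the projection inequality $\left\Vert\widehat{\mathbf{x}}_{t+1} - \Pi_{\mathcal{F}_{t}^{*}}(\widehat{\mathbf{x}}_{t+1})\right\Vert \leq \left\Vert\widehat{\mathbf{x}}_{t+1} - \mathbf{m}_{t}\right\Vert$ at the outset to reduce everything to the fixed target $\mathbf{m}_{t}$, whereas the paper applies it at the end of the chain.
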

\begin{proof}
See \cref{main-result-proof}.
\end{proof}
\begin{theorem}
\label{main-result-corollary}
Under the conditions laid out in Lemma \ref{main-result}, along with Assumption \ref{ass:lipschitz}, the dynamic regret for the sequence $\widehat{\mathbf{x}}_{1:T}$ of actions generated by \cref{alg:ogd} is bounded above as follows:
\begin{equation}
\label{eq:regret-bound2}
\sum_{t=1}^{T} f_{t}(\widehat{\mathbf{x}}_{t}) - \sum_{t=1}^{T} \min_{\mathbf{x} \in \mathcal{X}} f_{t}(\mathbf{x})
\leq \frac{K_{f}\mathcal{P}_{T}^{*}}{1-\rho} + \frac{K_{f}\left\Vert\widehat{\mathbf{x}}_{1} - \widehat{\mathbf{x}}_{1}^{*}\right\Vert}{1-\rho},
\end{equation}
where $\rho = \sqrt{1 - \frac{\lambda_{\ell}}{\lambda_{\ell} \ + \ \eta^{-1}}}$ and $\widehat{\mathbf{x}}_{1}^{*} \equiv \Pi_{\mathcal{F}_{1}^{*}}(\widehat{\mathbf{x}}_{1})$.
\end{theorem}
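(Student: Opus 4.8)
The plan is to treat this theorem as a bookkeeping corollary that stitches together Lemma~\ref{main-result} with the chain of inequalities already assembled in \cref{motivation}. First I would invoke Lemma~\ref{main-result}: under its three hypotheses (strong convexity of each $\ell_{t}$ with unique minimizer $\mathcal{L}_{t}^{*} = \{\Pi_{\mathcal{F}_{t}^{*}}(\widehat{\mathbf{x}}_{t})\}$, smoothness, and the step-size restriction $\eta \le 1/L_{\ell}$), the surrogate sequence $\widehat{\mathbf{x}}_{1:T}$ produced by \cref{alg:ogd} obeys the contraction~\eqref{eq:contraction-ogd} with $\rho = \sqrt{1 - \lambda_{\ell}/(\lambda_{\ell} + \eta^{-1})}$. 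After a trivial index shift $t \mapsto t-1$, this is precisely the relation~\eqref{eq:contraction} required by the motivation. I would also record that $\rho \in (0,1)$: since $\lambda_{\ell}, \eta^{-1} > 0$ we have $\lambda_{\ell}/(\lambda_{\ell} + \eta^{-1}) \in (0,1)$, so $1-\rho > 0$ and the denominators in the bound are legitimate.

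Having secured~\eqref{eq:contraction}, I would reuse verbatim the telescoping argument of~\eqref{eq:aggerr1}--\eqref{eq:aggerr3}. Splitting each summand through the triangle inequality and substituting the contraction bound for the cross terms produces the self-referential inequality~\eqref{eq:aggerr2}, whose right-hand side carries a copy $\rho \sum_{t=1}^{T}\Vert\widehat{\mathbf{x}}_{t} - \Pi_{\mathcal{F}_{t}^{*}}(\widehat{\mathbf{x}}_{t})\Vert$ of the aggregate error; the residual term is controlled by $\mathcal{P}_{T}^{*}$ because $\|\Pi_{\mathcal{F}_{t}^{*}}(\widehat{\mathbf{x}}_{t}) - \Pi_{\mathcal{F}_{t-1}^{*}}(\widehat{\mathbf{x}}_{t})\| \le \max_{\mathbf{x}\in\mathcal{X}}\|\Pi_{\mathcal{F}_{t}^{*}}(\mathbf{x}) - \Pi_{\mathcal{F}_{t-1}^{*}}(\mathbf{x})\|$. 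Regrouping the $\rho$-weighted copy onto the left and dividing by $1-\rho > 0$ yields~\eqref{eq:aggerr3}:
\begin{equation*}
\sum_{t=1}^{T}\left\Vert\widehat{\mathbf{x}}_{t} - \Pi_{\mathcal{F}_{t}^{*}}(\widehat{\mathbf{x}}_{t})\right\Vert \le \frac{\mathcal{P}_{T}^{*}}{1-\rho} + \frac{\left\Vert\widehat{\mathbf{x}}_{1} - \widehat{\mathbf{x}}_{1}^{*}\right\Vert}{1-\rho},
\end{equation*}
where $\widehat{\mathbf{x}}_{1}^{*} = \Pi_{\mathcal{F}_{1}^{*}}(\widehat{\mathbf{x}}_{1})$.

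Finally I would close the loop through Assumption~\ref{ass:lipschitz}. Applying the pointwise bound~\eqref{eq:lipschitz-corollary} and summing over $t$ gives~\eqref{eq:main2}, namely $\text{D-Regret}_{f}(\widehat{\mathbf{x}}_{1:T}) \le K_{f}\sum_{t=1}^{T}\Vert\widehat{\mathbf{x}}_{t} - \Pi_{\mathcal{F}_{t}^{*}}(\widehat{\mathbf{x}}_{t})\Vert$. Multiplying the displayed aggregate-error bound through by $K_{f}$ and substituting reproduces exactly~\eqref{eq:regret-bound2}, completing the argument.

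Because every substantive ingredient is imported -- the contraction from Lemma~\ref{main-result}, the telescoping reduction from \cref{motivation}, and the Lipschitz-to-regret passage from Assumption~\ref{ass:lipschitz} -- I do not expect a genuine obstacle; this is a corollary assembled from parts. The only points that demand care are confirming that the hypotheses of Lemma~\ref{main-result} are actually in force (in particular the singleton condition $\mathcal{L}_{t}^{*} = \{\Pi_{\mathcal{F}_{t}^{*}}(\widehat{\mathbf{x}}_{t})\}$, which aligns the uniclass minimizer with the projection appearing in the telescoping sum) and verifying that the index conventions of~\eqref{eq:contraction-ogd} and~\eqref{eq:contraction} match after the shift, so that the motivation's derivation applies without modification.
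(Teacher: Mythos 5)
Your proposal is correct and follows essentially the same route as the paper: the paper's own proof is simply a pointer back to \cref{motivation}, Equations~\eqref{eq:aggerr2}--\eqref{eq:regret-bound}, which is exactly the chain you spell out (contraction from Lemma~\ref{main-result}, the telescoping bound yielding~\eqref{eq:aggerr3}, and the Lipschitz passage~\eqref{eq:main2}). Your added care about the index shift in~\eqref{eq:contraction-ogd} versus~\eqref{eq:contraction} and the check that $\rho \in (0,1)$ are details the paper leaves implicit, but they do not change the argument.
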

\begin{proof}
See \cref{motivation}, especially Equations \eqref{eq:aggerr2}--\eqref{eq:regret-bound}.
\end{proof}
It is crucial to note that \textbf{the regret bound in \eqref{eq:regret-bound2} is achieved by any arbitrary sequence $f_{1:T}$ of cost functions}. In particular, we do not require any additional desirable properties such as strong convexity, smoothness or exp-concavity, unlike previous studies. Mokhtari et al. \yrcite{mokhtari16} derive the same $O(\mathcal{P}_{T}^{*})$ bound for exponentially-weighted OGD updates, but their results rely on the assumptions that the cost functions $f_{t}$ are strongly convex, smooth and have bounded gradient norms. Additionally, we remark that \cref{main-result-corollary} assumes neither a prior knowledge nor an online feedback about $\mathcal{P}_{T}^{*}$.

At this point, a natural question that arises is the following: what kind of uniclass loss functions satisfy the assumptions made by Lemma \ref{main-result}? We provide a compelling example below.

\subsubsection{An Example of Uniclass Loss Function}

To illustrate \cref{alg:ogd}, we discuss an example of uniclass loss function that abides by the assumptions of Lemma \ref{main-result}, namely the squared loss defined by
\begin{equation}
\label{eq:sqloss}
\ell_{t}^{\text{sq}}(\mathbf{x})
\equiv \frac{1}{2}\left\Vert\mathbf{x} - \Pi_{\mathcal{F}_{t}^{*}}(\widehat{\mathbf{x}}_{t})\right\Vert^{2}.
\end{equation}
This is perhaps the most natural uniclass loss function one can think of. It is trivial to show that it is $\lambda$-strongly convex and $L$-smooth, for $\lambda \leq 1$ and $L \geq 1$. Furthermore, by virtue of the properties of the norm,
\begin{equation}
\label{eq:sqloss-min}
\mathcal{L}_{t}^{*}
= \argmin_{\mathbf{x} \in \mathcal{X}} \ \ell_{t}^{\text{sq}}(\mathbf{x})
= \Big\{\Pi_{\mathcal{F}_{t}^{*}}(\widehat{\mathbf{x}}_{t})\Big\}.
\end{equation}

\subsection{Online Multiple Gradient Descent}

A convenient feature of our universal strategy is that the user can access the gradient of each loss function $\ell_{t}$ more than once, since they get to specify these functions in the first place, which naturally places them in a full-information feedback scenario.
Now, when a learner is allowed to query the loss gradient multiple times, it is possible to achieve tighter dynamic regret bounds, as demonstrated in \cite{zhang17, zhao21}. 
For this reason, Zhang et al. \yrcite{zhang17} devised a variant of online gradient descent, called \emph{online multiple gradient descent} (OMGD), wherein the loss gradient is queried multiple times on each round. Specifically, on round $t$, given the current action $\widehat{\mathbf{x}}_{t}$, OMGD generates a sequence $\widehat{\mathbf{z}}_{t}^{(1)}, \widehat{\mathbf{z}}_{t}^{(2)}, \ldots, \widehat{\mathbf{z}}_{t}^{(m)}, \widehat{\mathbf{z}}_{t}^{(m+1)}$, where $m$ represents the total number of inner iterations, a constant \emph{independent} of the time horizon $T$. The starting point is $\widehat{\mathbf{z}}_{t}^{(1)} = \widehat{\mathbf{x}}_{t}$, followed by the update rule
\begin{equation}
\label{eq:omgd-update}
\widehat{\mathbf{z}}_{t}^{(i+1)}
= \Pi_{\mathcal{X}}[\widehat{\mathbf{z}}_{t}^{(i)} - \eta\nabla\ell_{t}(\widehat{\mathbf{z}}_{t}^{(i)})],
\quad i \in [m],
\end{equation}
where $i$ denotes the index of the inner loop. The subsequent action $\widehat{\mathbf{x}}_{t+1}$ is chosen to be the output of the inner loop, i.e. $\widehat{\mathbf{x}}_{t+1} = \widehat{\mathbf{z}}_{t}^{(m+1)}$. The entire procedure is summarized in \cref{alg:omgd}.
\begin{algorithm}[t]
   \caption{Online Multiple Gradient Descent for Uniclass Prediction}
   \label{alg:omgd}
\begin{algorithmic}[1]
   \STATE {\bfseries Input:}
   \begin{itemize}
   \item closed and convex set $\mathcal{X} \subseteq \mathbb{R}^{n}$
   \item uniclass loss function $\ell$ as per \cref{eq:generic-uniclass-loss}
   \item learning rate $\eta > 0$
   \item number of inner iterations $m$
   \end{itemize}
   \STATE {\bfseries Initialize:} Pick $\widehat{\mathbf{x}}_{1}$ arbitrarily in $\mathcal{X}$
   \FOR{$t=1$ {\bfseries to} $T-1$}
   \STATE Submit $\widehat{\mathbf{x}}_{t}$
   \STATE Observe the cost function $f_{t}:\mathcal{X} \to \mathbb{R}$ and compute the quantities
   \begin{equation*}
   \mathcal{F}_{t}^{*} = \argmin_{\mathbf{x} \in \mathcal{X}} \ f_{t}(\mathbf{x}), \quad
   \Pi_{\mathcal{F}_{t}^{*}}(\widehat{\mathbf{x}}_{t}) = \argmin_{\mathbf{x} \in \mathcal{F}_{t}^{*}} \ \left\Vert\mathbf{x} - \widehat{\mathbf{x}}_{t}\right\Vert
   \end{equation*}
   \STATE Set $\widehat{\mathbf{z}}_{t}^{(1)} = \widehat{\mathbf{x}}_{t}$
   \FOR{$i=1$ {\bfseries to} $m$}
   \STATE
   \begin{equation*}
	\widehat{\mathbf{z}}_{t}^{(i+1)}
	= \Pi_{\mathcal{X}}[\widehat{\mathbf{z}}_{t}^{(i)} - \eta\nabla\ell_{t}(\widehat{\mathbf{z}}_{t}^{(i)})]
   \end{equation*}
   \ENDFOR
   \STATE {\bfseries Output:} $\widehat{\mathbf{x}}_{t+1} = \widehat{\mathbf{z}}_{t}^{(m+1)}$
   \ENDFOR
\end{algorithmic}
\end{algorithm}

By applying gradient descent multiple times, we are able to extract more information from each loss function, and are therefore more likely to obtain a tighter dynamic regret bound. The following theorem shows that repeated access to the gradient indeed helps improve dynamic regret.
\begin{theorem}
\label{omgd}
Suppose the assumptions regarding the uniclass loss functions $\ell_{t}$ in Lemma \ref{main-result}, along with Assumptions \ref{ass:lipschitz} and \ref{ass:smoothness} about the original cost functions $f_{t}$, are true. By setting $\eta \leq 1/L_{\ell}$ and $m = \lceil\frac{\lambda_{\ell} \ + \ \eta^{-1}}{\lambda_{\ell}} \mathrm{ln}(4)\rceil$ in \cref{alg:omgd}, we have for any $\alpha > 0$
\begin{align}
\label{eq:omgd-regret-bound}
& \sum_{t=1}^{T} f_{t}(\widehat{\mathbf{x}}_{t}) - \sum_{t=1}^{T} \min_{\mathbf{x} \in \mathcal{X}} f_{t}(\mathbf{x}) \\ \nonumber
&\leq \min\bigg\{2K_{f}\mathcal{P}_{T}^{*} + 2K_{f}\left\Vert\widehat{\mathbf{x}}_{1} - \widehat{\mathbf{x}}_{1}^{*}\right\Vert,\; \frac{G_{T}^{*}}{2\alpha} + 2(L_{f}+\alpha)\mathcal{S}_{T}^{*} + (L_{f}+\alpha)\left\Vert\widehat{\mathbf{x}}_{1} - \widehat{\mathbf{x}}_{1}^{*}\right\Vert^{2}\bigg\},
\end{align}
where $\widehat{\mathbf{x}}_{1}^{*} \equiv \Pi_{\mathcal{F}_{1}^{*}}(\widehat{\mathbf{x}}_{1})$, $\mathcal{S}_{T}^{*}$ is defined in \cref{eq:generalized-path-length-measures} and
\begin{equation}
{G}_{T}^{*}
\equiv \max_{\{\widehat{\mathbf{x}}_{t}^{*} \in \mathcal{F}_{t}^{*}\}_{t=1}^{T}} \ \sum_{t=1}^{T}\left\Vert\nabla f_{t}(\widehat{\mathbf{x}}_{t}^{*})\right\Vert^{2}.
\end{equation}

Furthermore, suppose $G_{T}^{*} = O(\mathcal{S}_{T}^{*})$. In this case, \cref{eq:omgd-regret-bound} implies
\begin{equation}
\sum_{t=1}^{T} f_{t}(\widehat{\mathbf{x}}_{t}) - \sum_{t=1}^{T} \min_{\mathbf{x} \in \mathcal{X}} f_{t}(\mathbf{x})
= O(\min\{\mathcal{P}_{T}^{*},\ \mathcal{S}_{T}^{*}\}).
\end{equation}
\end{theorem}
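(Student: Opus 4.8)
The plan is to show that taking $m$ inner gradient steps per round turns the single-step contraction of Lemma~\ref{main-result} into an $m$-fold contraction, and then to feed this sharper contraction into two independent analyses—one producing the path-length term, the other the squared-path-length term—before taking their minimum. The crucial observation is that, within round $t$, the uniclass loss $\ell_t$ is \emph{fixed}: its unique minimizer $\Pi_{\mathcal{F}_t^*}(\widehat{\mathbf{x}}_t)$ is pinned down by the outer iterate $\widehat{\mathbf{x}}_t$ and does not move as the inner index $i$ advances. Hence each inner update $\widehat{\mathbf{z}}_t^{(i+1)} = \Pi_{\mathcal{X}}[\widehat{\mathbf{z}}_t^{(i)} - \eta\nabla\ell_t(\widehat{\mathbf{z}}_t^{(i)})]$ is an ordinary OGD step on $\ell_t$, so the contraction of Lemma~\ref{main-result} applies verbatim to every inner step with the same factor $\rho$. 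Chaining over $i=1,\ldots,m$ with $\widehat{\mathbf{z}}_t^{(1)}=\widehat{\mathbf{x}}_t$ and $\widehat{\mathbf{x}}_{t+1}=\widehat{\mathbf{z}}_t^{(m+1)}$ gives
\begin{equation*}
\left\Vert\widehat{\mathbf{x}}_{t+1} - \Pi_{\mathcal{F}_t^*}(\widehat{\mathbf{x}}_t)\right\Vert \leq \rho^m \left\Vert\widehat{\mathbf{x}}_t - \Pi_{\mathcal{F}_t^*}(\widehat{\mathbf{x}}_t)\right\Vert .
\end{equation*}
Substituting $\rho^2 = 1 - \lambda_\ell/(\lambda_\ell + \eta^{-1})$ and the prescribed $m = \lceil\frac{\lambda_\ell + \eta^{-1}}{\lambda_\ell}\ln(4)\rceil$, the estimate $(1 - \lambda_\ell/(\lambda_\ell + \eta^{-1}))^m \leq e^{-m\lambda_\ell/(\lambda_\ell+\eta^{-1})} \leq e^{-\ln 4} = 1/4$ yields $\rho^m \leq 1/2$; that is, OMGD satisfies the contraction~\eqref{eq:contraction} with effective factor $\rho^m \leq 1/2$.

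For the first term in the minimum I would simply re-run the telescoping argument of \cref{motivation} (Equations~\eqref{eq:aggerr2}--\eqref{eq:regret-bound}) with $\rho$ replaced by $\rho^m$. Because $1-\rho^m \geq 1/2$, the aggregate-error bound~\eqref{eq:aggerr3} becomes $\sum_t\Vert\widehat{\mathbf{x}}_t - \Pi_{\mathcal{F}_t^*}(\widehat{\mathbf{x}}_t)\Vert \leq 2\mathcal{P}_T^* + 2\Vert\widehat{\mathbf{x}}_1 - \widehat{\mathbf{x}}_1^*\Vert$, and passing through $K_f$ via~\eqref{eq:main2} delivers exactly $2K_f\mathcal{P}_T^* + 2K_f\Vert\widehat{\mathbf{x}}_1 - \widehat{\mathbf{x}}_1^*\Vert$.

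For the second term I would invoke the smoothness of $f_t$ (Assumption~\ref{ass:smoothness}). Writing $\widehat{\mathbf{x}}_t^* = \Pi_{\mathcal{F}_t^*}(\widehat{\mathbf{x}}_t)$ and $a_t = \Vert\widehat{\mathbf{x}}_t - \widehat{\mathbf{x}}_t^*\Vert$, smoothness gives $f_t(\widehat{\mathbf{x}}_t) - f_t(\widehat{\mathbf{x}}_t^*) \leq \nabla f_t(\widehat{\mathbf{x}}_t^*)^\intercal(\widehat{\mathbf{x}}_t - \widehat{\mathbf{x}}_t^*) + \frac{L_f}{2}a_t^2$, and splitting the inner product with Young's inequality $\mathbf{u}^\intercal\mathbf{v} \leq \frac{1}{2\alpha}\Vert\mathbf{u}\Vert^2 + \frac{\alpha}{2}\Vert\mathbf{v}\Vert^2$ produces, after summation, $\sum_t f_t(\widehat{\mathbf{x}}_t) - \sum_t\min_{\mathbf{x}}f_t(\mathbf{x}) \leq \frac{G_T^*}{2\alpha} + \frac{L_f+\alpha}{2}\sum_t a_t^2$, where the gradient sum is absorbed into $G_T^*$ by its definition. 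It then remains to bound $\sum_t a_t^2$. Applying the triangle inequality through the intermediate point $\Pi_{\mathcal{F}_{t+1}^*}(\Pi_{\mathcal{F}_t^*}(\widehat{\mathbf{x}}_{t+1}))$ together with the $m$-fold contraction gives the recursion $a_{t+1} \leq \frac{1}{2}a_t + P_{t+1}$, where $P_{t+1} \equiv \max_{\mathbf{x}}\Vert\Pi_{\mathcal{F}_{t+1}^*}(\mathbf{x}) - \Pi_{\mathcal{F}_t^*}(\mathbf{x})\Vert$ so that $\sum_{t=2}^T P_t^2 = \mathcal{S}_T^*$. Squaring via $(x+y)^2 \leq 2x^2 + 2y^2$ yields $a_{t+1}^2 \leq \frac{1}{2}a_t^2 + 2P_{t+1}^2$; summing over $t$ and rearranging the resulting geometric factor gives $\sum_t a_t^2 \leq 2\Vert\widehat{\mathbf{x}}_1 - \widehat{\mathbf{x}}_1^*\Vert^2 + 4\mathcal{S}_T^*$, which reproduces the second term after multiplication by $\tfrac{L_f+\alpha}{2}$.

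Taking the minimum of the two bounds establishes~\eqref{eq:omgd-regret-bound}; the closing $O(\min\{\mathcal{P}_T^*,\ \mathcal{S}_T^*\})$ claim then follows because the initial-distance terms are $O(1)$ constants and the hypothesis $G_T^* = O(\mathcal{S}_T^*)$ makes the second branch $O(\mathcal{S}_T^*)$. I expect the squared-path-length branch to be the main obstacle on two counts: first, the passage from the contraction to $a_{t+1} \leq \tfrac12 a_t + P_{t+1}$ requires choosing the intermediate projection so that the residual matches the definition of $\mathcal{P}_T^*/\mathcal{S}_T^*$ exactly; and second, the summation must pair the constant $c=1$ in $(x+y)^2 \leq (1+c)x^2 + (1+c^{-1})y^2$ with $\rho^m = \tfrac12$ so that the geometric series closes with precisely the stated constants $4$ and $2$. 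By contrast, establishing the $m$-fold contraction and the path-length branch is essentially bookkeeping on top of Lemma~\ref{main-result} and the motivation in \cref{motivation}.
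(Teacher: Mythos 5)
Your proposal is correct and takes essentially the same route as the paper's own proof: an $m$-fold application of the contraction in Lemma~\ref{main-result} with $m = \lceil\frac{\lambda_{\ell}+\eta^{-1}}{\lambda_{\ell}}\ln 4\rceil$ to obtain an effective factor of $1/2$, the telescoping argument of \cref{motivation} for the $\mathcal{P}_{T}^{*}$ branch, and smoothness plus Young's inequality plus a triangle-inequality recursion yielding $\sum_{t}\Vert\widehat{\mathbf{x}}_{t}-\widehat{\mathbf{x}}_{t}^{*}\Vert^{2} \leq 4\mathcal{S}_{T}^{*} + 2\Vert\widehat{\mathbf{x}}_{1}-\widehat{\mathbf{x}}_{1}^{*}\Vert^{2}$ for the $\mathcal{S}_{T}^{*}$ branch, with identical constants throughout. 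The only cosmetic differences are that you chain the inner-loop contraction toward the fixed anchor $\Pi_{\mathcal{F}_{t}^{*}}(\widehat{\mathbf{x}}_{t})$ (arguably cleaner, since $\ell_{t}$ is anchored there for the entire round) and organize the squared-distance bound as a per-step recursion rather than summing all at once as the paper does.
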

Compared to \cref{main-result-corollary}, the OMGD method reduces the dynamic regret rate from $O(\mathcal{P}_{T}^{*})$ to $O(\min\{\mathcal{P}_{T}^{*},\ \mathcal{S}_{T}^{*}\})$, provided the cost-function gradients evaluated at the minimizers are small.

\section{Conclusion and Future Work}

In this paper, we propose an original and intelligible strategy for universal online optimization in changing environments that can handle convex, strongly convex and exp-concave cost functions simultaneously. To deal with the uncertainty of online cost functions, we reduce the original problem to a uniclass prediction problem to which we apply existing dynamic OCO algorithms, after strategically selecting a uniclass loss function. Assuming the original cost functions are Lipschitz continuous, the thus derived dynamic regret rate naturally carries over into the original problem.

There are certainly several avenues for future research. First, our universal framework is designed for the purpose of minimizing the worst-case dynamic regret. Going forward, we will investigate how to modify it to support stronger notions of regret, such as general dynamic regret \cite{zinkevich03} and adaptive regret \cite{hazan07b, daniely15}. On the other hand, as we mentioned in the introduction, our method should be able to handle non-convex cost functions as well, at least in theory. Since this would constitute a major breakthrough, we shall devote considerable time and effort to study whether and how it can be achieved.

\bibliography{colt2023}

\appendix

\section{Proofs from \cref{universal} -- Main Results}

\subsection{Proof of Lemma \ref{main-result}}
\label{main-result-proof}

We first introduce the following property of strongly convex functions \cite{hazan11}.
\begin{lemma}
\label{hazan}
Assume that $f:\mathcal{X} \to \mathbb{R}$ is $\lambda$-strongly convex, and let $\mathbf{x}^{*}
= \argmin_{\mathbf{x} \in \mathcal{X}} f(\mathbf{x})$. Then, we have
\begin{equation}
\label{eq:hazan}
f(\mathbf{x}) - f(\mathbf{x}^{*})
\geq \frac{\lambda}{2}\left\Vert\mathbf{x} - \mathbf{x}^{*}\right\Vert^{2},
\quad \forall \mathbf{x} \in \mathcal{X}.
\end{equation}
\end{lemma}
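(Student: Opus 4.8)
The plan is to derive the bound directly from the definition of strong convexity (Definition~\ref{def:strong-convexity}) together with the first-order optimality condition satisfied by the constrained minimizer $\mathbf{x}^{*}$. Instantiating \eqref{eq:strong-convexity1} with base point $\mathbf{x}^{*}$ and argument $\mathbf{x}$ gives
\[
f(\mathbf{x}) \geq f(\mathbf{x}^{*}) + \nabla f(\mathbf{x}^{*})^{\intercal}(\mathbf{x} - \mathbf{x}^{*}) + \frac{\lambda}{2}\left\Vert\mathbf{x} - \mathbf{x}^{*}\right\Vert^{2}, \quad \forall \mathbf{x} \in \mathcal{X}.
\]
If I can show that the inner-product term is nonnegative for every $\mathbf{x} \in \mathcal{X}$, then discarding it only weakens the right-hand side, and the claim \eqref{eq:hazan} follows immediately after rearranging.

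The crux is therefore to verify the variational inequality $\nabla f(\mathbf{x}^{*})^{\intercal}(\mathbf{x} - \mathbf{x}^{*}) \geq 0$ for all $\mathbf{x} \in \mathcal{X}$. Here I would exploit the convexity of $\mathcal{X}$: for any $\mathbf{x} \in \mathcal{X}$ and any $\tau \in [0,1]$, the segment point $\mathbf{x}^{*} + \tau(\mathbf{x} - \mathbf{x}^{*}) = (1-\tau)\mathbf{x}^{*} + \tau\mathbf{x}$ again lies in $\mathcal{X}$. Since $\mathbf{x}^{*}$ minimizes $f$ over $\mathcal{X}$, the scalar map $\phi(\tau) \equiv f(\mathbf{x}^{*} + \tau(\mathbf{x} - \mathbf{x}^{*}))$ is minimized over $[0,1]$ at $\tau = 0$, so its one-sided derivative there is nonnegative. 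By the chain rule $\phi'(0^{+}) = \nabla f(\mathbf{x}^{*})^{\intercal}(\mathbf{x} - \mathbf{x}^{*})$, which yields exactly the desired nonnegativity.

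Substituting this back eliminates the linear term and leaves $f(\mathbf{x}) \geq f(\mathbf{x}^{*}) + \frac{\lambda}{2}\Vert\mathbf{x} - \mathbf{x}^{*}\Vert^{2}$, i.e. \eqref{eq:hazan}. The main obstacle, and the only nontrivial point, is the optimality condition over the constrained set $\mathcal{X}$: when $\mathbf{x}^{*}$ is interior we simply have $\nabla f(\mathbf{x}^{*}) = \mathbf{0}$, but on the boundary the stationarity argument must be replaced by the segment/directional-derivative argument above, which is precisely where convexity of $\mathcal{X}$ and the differentiability of $f$ (implicit in the gradient-based form of Definition~\ref{def:strong-convexity}) are used.
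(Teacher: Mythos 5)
Your proof is correct: instantiating the strong-convexity inequality \eqref{eq:strong-convexity1} at the minimizer $\mathbf{x}^{*}$ and discarding the linear term via the first-order optimality (variational inequality) condition $\nabla f(\mathbf{x}^{*})^{\intercal}(\mathbf{x}-\mathbf{x}^{*}) \geq 0$, which you justify properly with the directional-derivative argument along the segment $(1-\tau)\mathbf{x}^{*}+\tau\mathbf{x} \in \mathcal{X}$, yields \eqref{eq:hazan} exactly. The paper itself offers no proof of this lemma---it simply imports it from \cite{hazan11}---and the standard argument in that reference is precisely the one you give, so your approach coincides with the intended one.
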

Additionally, the following lemma, the proof of which can be found in \cite{mokhtari16}, will be useful.
\begin{lemma}
\label{mokhtari}
Consider the update in \cref{eq:ogd-update}. Given the iterate $\widehat{\mathbf{x}}_{t}$, the instantaneous gradient $\nabla\ell_{t}(\widehat{\mathbf{x}}_{t})$ and the positive constant $\eta$, the optimal argument of the optimization problem
\begin{equation}
\label{eq:mokhtari}
\widetilde{\mathbf{x}}_{t+1}
= \argmin_{\mathbf{x} \in \mathcal{X}} \ \Big\{\nabla\ell_{t}(\widehat{\mathbf{x}}_{t})^{\intercal}(\mathbf{x}-\widehat{\mathbf{x}}_{t}) + \frac{1}{2\eta}\left\Vert\mathbf{x} - \widehat{\mathbf{x}}_{t}\right\Vert^{2}\Big\}
\end{equation}
is equal to the iterate $\widehat{\mathbf{x}}_{t+1}$ generated by \eqref{eq:ogd-update}.
\end{lemma}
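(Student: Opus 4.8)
The result is the familiar equivalence between a projected-gradient step and the minimizer of a linearized-plus-quadratic (``proximal'') surrogate. The plan is to establish it by completing the square in the objective of \eqref{eq:mokhtari} and then recognizing the resulting problem as a Euclidean projection.

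First, I would take the bracketed objective in \eqref{eq:mokhtari}, factor out the positive scalar $1/(2\eta)$, and complete the square in $\mathbf{x}$, verifying the identity
\begin{equation*}
\nabla\ell_{t}(\widehat{\mathbf{x}}_{t})^{\intercal}(\mathbf{x}-\widehat{\mathbf{x}}_{t}) + \frac{1}{2\eta}\left\Vert\mathbf{x} - \widehat{\mathbf{x}}_{t}\right\Vert^{2}
= \frac{1}{2\eta}\left\Vert\mathbf{x} - \left(\widehat{\mathbf{x}}_{t} - \eta\nabla\ell_{t}(\widehat{\mathbf{x}}_{t})\right)\right\Vert^{2} - \frac{\eta}{2}\left\Vert\nabla\ell_{t}(\widehat{\mathbf{x}}_{t})\right\Vert^{2}.
\end{equation*}
The only term on the right-hand side that depends on $\mathbf{x}$ is the first one; the trailing $-\frac{\eta}{2}\Vert\nabla\ell_{t}(\widehat{\mathbf{x}}_{t})\Vert^{2}$ is a constant with respect to the minimization variable.

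Second, since adding a constant and multiplying by the fixed positive scalar $1/(2\eta)$ leave the $\argmin$ unchanged, the problem in \eqref{eq:mokhtari} is equivalent to minimizing $\Vert\mathbf{x} - (\widehat{\mathbf{x}}_{t} - \eta\nabla\ell_{t}(\widehat{\mathbf{x}}_{t}))\Vert^{2}$ over $\mathbf{x}\in\mathcal{X}$. Because the map $s\mapsto s^{2}$ is strictly increasing on $\mathbb{R}_{+}$, this is in turn equivalent to minimizing the norm $\Vert\mathbf{x} - (\widehat{\mathbf{x}}_{t} - \eta\nabla\ell_{t}(\widehat{\mathbf{x}}_{t}))\Vert$, which by the definition of the Euclidean projection operator in \cref{notation} is solved by $\Pi_{\mathcal{X}}(\widehat{\mathbf{x}}_{t} - \eta\nabla\ell_{t}(\widehat{\mathbf{x}}_{t}))$. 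Comparing this with the OGD update in \eqref{eq:ogd-update}, the minimizer is exactly $\widehat{\mathbf{x}}_{t+1}$, which yields $\widetilde{\mathbf{x}}_{t+1} = \widehat{\mathbf{x}}_{t+1}$ as claimed.

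As for the main obstacle: there is essentially none of substance, as this is a routine completion-of-squares argument. The only points requiring minor care are the passage from squared-norm to norm minimization (justified by the monotonicity of squaring on the nonnegatives) and the appeal to the closedness and convexity of $\mathcal{X}$, which guarantee that the Euclidean projection, and hence the minimizer in \eqref{eq:mokhtari}, is well-defined and single-valued so that the stated identity holds without ambiguity.
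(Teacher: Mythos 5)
Your proof is correct, and it is essentially the paper's proof: the paper delegates this lemma entirely to \cite{mokhtari16}, where the argument is exactly your completion-of-squares reduction showing the linearized-plus-quadratic objective equals $\frac{1}{2\eta}\Vert\mathbf{x} - (\widehat{\mathbf{x}}_{t} - \eta\nabla\ell_{t}(\widehat{\mathbf{x}}_{t}))\Vert^{2}$ up to an $\mathbf{x}$-independent constant, so the constrained minimizer is the Euclidean projection in \eqref{eq:ogd-update}. Your added care about monotonicity of squaring and about closedness and convexity of $\mathcal{X}$ (ensuring the projection is well-defined and unique) is appropriate and complete.
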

It is straightforward to verify that the objective function in \eqref{eq:mokhtari} is strongly convex, with constant $\lambda = 1 / \eta$. Thus, by applying Lemma \ref{hazan} and rearranging
\begin{align}
& \ell_{t}(\widehat{\mathbf{x}}_{t}) + \nabla\ell_{t}(\widehat{\mathbf{x}}_{t})^{\intercal}(\widehat{\mathbf{x}}_{t+1} - \widehat{\mathbf{x}}_{t}) + \frac{1}{2\eta}\left\Vert\widehat{\mathbf{x}}_{t+1} - \widehat{\mathbf{x}}_{t}\right\Vert^{2} \\ \nonumber
& \leq \ell_{t}(\widehat{\mathbf{x}}_{t}) + \nabla\ell_{t}(\widehat{\mathbf{x}}_{t})^{\intercal}(\mathbf{x}-\widehat{\mathbf{x}}_{t}) + \frac{1}{2\eta}\left\Vert\mathbf{x} - \widehat{\mathbf{x}}_{t}\right\Vert^{2} - \frac{1}{2\eta}\left\Vert\mathbf{x} - \widehat{\mathbf{x}}_{t+1}\right\Vert^{2},
\end{align}
for any $\mathbf{x} \in \mathcal{X}$.
In particular,
\begin{align}
\label{eq:main-result-2}
& \ell_{t}(\widehat{\mathbf{x}}_{t}) + \nabla\ell_{t}(\widehat{\mathbf{x}}_{t})^{\intercal}(\widehat{\mathbf{x}}_{t+1} - \widehat{\mathbf{x}}_{t}) + \frac{1}{2\eta}\left\Vert\widehat{\mathbf{x}}_{t+1} - \widehat{\mathbf{x}}_{t}\right\Vert^{2} \\ \nonumber
& \leq \ell_{t}(\widehat{\mathbf{x}}_{t}) + \nabla\ell_{t}(\widehat{\mathbf{x}}_{t})^{\intercal}\left(\Pi_{\mathcal{F}_{t}^{*}}(\widehat{\mathbf{x}}_{t}) - \widehat{\mathbf{x}}_{t}\right) + \frac{1}{2\eta}\left\Vert\Pi_{\mathcal{F}_{t}^{*}}(\widehat{\mathbf{x}}_{t}) - \widehat{\mathbf{x}}_{t}\right\Vert^{2} - \frac{1}{2\eta}\left\Vert\widehat{\mathbf{x}}_{t+1} - \Pi_{\mathcal{F}_{t}^{*}}(\widehat{\mathbf{x}}_{t})\right\Vert^{2}.
\end{align}
On the other hand, from the convexity of $\ell_{t}$, it follows that
\begin{equation}
\label{eq:main-result-3}
\ell_{t}(\widehat{\mathbf{x}}_{t}) + \nabla\ell_{t}(\widehat{\mathbf{x}}_{t})^{\intercal}\left(\Pi_{\mathcal{F}_{t}^{*}}(\widehat{\mathbf{x}}_{t}) - \widehat{\mathbf{x}}_{t}\right)
\leq \ell_{t}\left(\Pi_{\mathcal{F}_{t}^{*}}(\widehat{\mathbf{x}}_{t})\right).
\end{equation}
Furthermore, the smoothness assumption (see Definition \ref{def:smoothness}), along with the choice $\eta \leq 1/L_{\ell}$, imply
\begin{align}
\label{eq:main-result-4}
\ell_{t}(\widehat{\mathbf{x}}_{t+1})
&\leq \ell_{t}(\widehat{\mathbf{x}}_{t}) + \nabla\ell_{t}(\widehat{\mathbf{x}}_{t})^{\intercal}(\widehat{\mathbf{x}}_{t+1} - \widehat{\mathbf{x}}_{t}) + \frac{L_{\ell}}{2}\left\Vert\widehat{\mathbf{x}}_{t+1} - \widehat{\mathbf{x}}_{t}\right\Vert^{2} \\ \nonumber
&\leq \ell_{t}(\widehat{\mathbf{x}}_{t}) + \nabla\ell_{t}(\widehat{\mathbf{x}}_{t})^{\intercal}(\widehat{\mathbf{x}}_{t+1} - \widehat{\mathbf{x}}_{t}) + \frac{1}{2\eta}\left\Vert\widehat{\mathbf{x}}_{t+1} - \widehat{\mathbf{x}}_{t}\right\Vert^{2}.
\end{align}
Combining Equations \eqref{eq:main-result-2}-\eqref{eq:main-result-4}, we obtain
\begin{equation}
\label{eq:main-result-5}
\ell_{t}(\widehat{\mathbf{x}}_{t+1})
\leq \ell_{t}\left(\Pi_{\mathcal{F}_{t}^{*}}(\widehat{\mathbf{x}}_{t})\right) + \frac{1}{2\eta}\left\Vert\Pi_{\mathcal{F}_{t}^{*}}(\widehat{\mathbf{x}}_{t}) - \widehat{\mathbf{x}}_{t}\right\Vert^{2} - \frac{1}{2\eta}\left\Vert\widehat{\mathbf{x}}_{t+1} - \Pi_{\mathcal{F}_{t}^{*}}(\widehat{\mathbf{x}}_{t})\right\Vert^{2}.
\end{equation}
Finally, since $\ell_{t}$ is assumed to be $\lambda_{\ell}$-strongly convex, Lemma \ref{hazan} implies
\begin{equation}
\label{eq:main-result-6}
\ell_{t}(\widehat{\mathbf{x}}_{t+1}) - \ell_{t}\left(\Pi_{\mathcal{F}_{t}^{*}}(\widehat{\mathbf{x}}_{t})\right)
\geq \frac{\lambda_{\ell}}{2}\left\Vert\widehat{\mathbf{x}}_{t+1} - \Pi_{\mathcal{F}_{t}^{*}}(\widehat{\mathbf{x}}_{t})\right\Vert^{2}
\geq \frac{\lambda_{\ell}}{2}\left\Vert\widehat{\mathbf{x}}_{t+1} - \Pi_{\mathcal{F}_{t}^{*}}(\widehat{\mathbf{x}}_{t+1})\right\Vert^{2},
\end{equation}
where the second inequality follows directly from the definition of the Euclidean projection operator (see \cref{notation}). Substituting \eqref{eq:main-result-6} into \eqref{eq:main-result-5} and rearranging yields
\begin{align}
\label{eq:main-result-7}
\frac{1}{2\eta}\left\Vert\Pi_{\mathcal{F}_{t}^{*}}(\widehat{\mathbf{x}}_{t}) - \widehat{\mathbf{x}}_{t}\right\Vert^{2}
&\geq \frac{1}{2\eta}\left\Vert\widehat{\mathbf{x}}_{t+1} - \Pi_{\mathcal{F}_{t}^{*}}(\widehat{\mathbf{x}}_{t})\right\Vert^{2} + \frac{\lambda_{\ell}}{2}\left\Vert\widehat{\mathbf{x}}_{t+1} - \Pi_{\mathcal{F}_{t}^{*}}(\widehat{\mathbf{x}}_{t+1})\right\Vert^{2} \\ \nonumber
&\geq \left(\frac{1}{2\eta} + \frac{\lambda_{\ell}}{2}\right)\left\Vert\widehat{\mathbf{x}}_{t+1} - \Pi_{\mathcal{F}_{t}^{*}}(\widehat{\mathbf{x}}_{t+1})\right\Vert^{2},
\end{align}
which completes the proof.

\subsection{Proof of \cref{omgd}}
\label{omgd-proof}

For convenience, let us define the quantity
\begin{equation}
\label{eq:omgd-proof-1}
\widehat{\mathbf{x}}_{t}^{*} \equiv \Pi_{\mathcal{F}_{t}^{*}}(\widehat{\mathbf{x}}_{t}),
\quad t \in [T].
\end{equation}
By virtue of Assumption \ref{ass:smoothness} and the Cauchy–Schwarz inequality, we have
\begin{align}
\label{eq:omgd-proof-2}
f_{t}(\widehat{\mathbf{x}}_{t}) - \min_{\mathbf{x} \in \mathcal{X}} f_{t}(\mathbf{x})
&= f_{t}(\widehat{\mathbf{x}}_{t}) - f_{t}(\widehat{\mathbf{x}}_{t}^{*}) \\ \nonumber
&\leq \nabla f_{t}(\widehat{\mathbf{x}}_{t}^{*})^{\intercal}(\widehat{\mathbf{x}}_{t} - \widehat{\mathbf{x}}_{t}^{*}) + \frac{L_{f}}{2}\left\Vert\widehat{\mathbf{x}}_{t} - \widehat{\mathbf{x}}_{t}^{*}\right\Vert^{2} \\ \nonumber
&\leq \left\Vert\nabla f_{t}(\widehat{\mathbf{x}}_{t}^{*})\right\Vert\left\Vert\widehat{\mathbf{x}}_{t} - \widehat{\mathbf{x}}_{t}^{*}\right\Vert + \frac{L_{f}}{2}\left\Vert\widehat{\mathbf{x}}_{t} - \widehat{\mathbf{x}}_{t}^{*}\right\Vert^{2}.
\end{align}
On the other hand, for any $\alpha > 0$,
\begin{align}
\label{eq:omgd-proof-3}
0
&\leq \left(\frac{1}{\sqrt{2\alpha}}\left\Vert\nabla f_{t}(\widehat{\mathbf{x}}_{t}^{*})\right\Vert - \sqrt{\frac{\alpha}{2}}\left\Vert\widehat{\mathbf{x}}_{t} - \widehat{\mathbf{x}}_{t}^{*}\right\Vert\right)^{2} \\ \nonumber
&= \frac{1}{2\alpha}\left\Vert\nabla f_{t}(\widehat{\mathbf{x}}_{t}^{*})\right\Vert^{2}
  + \frac{\alpha}{2}\left\Vert\widehat{\mathbf{x}}_{t} - \widehat{\mathbf{x}}_{t}^{*}\right\Vert^{2}
  - \left\Vert\nabla f_{t}(\widehat{\mathbf{x}}_{t}^{*})\right\Vert\left\Vert\widehat{\mathbf{x}}_{t} - \widehat{\mathbf{x}}_{t}^{*}\right\Vert.
\end{align}
Rearranging yields
\begin{equation}
\label{eq:omgd-proof-4}
\left\Vert\nabla f_{t}(\widehat{\mathbf{x}}_{t}^{*})\right\Vert\left\Vert\widehat{\mathbf{x}}_{t} - \widehat{\mathbf{x}}_{t}^{*}\right\Vert
\leq \frac{1}{2\alpha}\left\Vert\nabla f_{t}(\widehat{\mathbf{x}}_{t}^{*})\right\Vert^{2}
	 + \frac{\alpha}{2}\left\Vert\widehat{\mathbf{x}}_{t} - \widehat{\mathbf{x}}_{t}^{*}\right\Vert^{2}.
\end{equation}
Substituting \eqref{eq:omgd-proof-4} into \eqref{eq:omgd-proof-2}, we obtain
\begin{equation}
\label{eq:omgd-proof-5}
f_{t}(\widehat{\mathbf{x}}_{t}) - \min_{\mathbf{x} \in \mathcal{X}} f_{t}(\mathbf{x})
\leq \frac{1}{2\alpha}\left\Vert\nabla f_{t}(\widehat{\mathbf{x}}_{t}^{*})\right\Vert^{2}
	 + \frac{L_{f}+\alpha}{2}\left\Vert\widehat{\mathbf{x}}_{t} - \widehat{\mathbf{x}}_{t}^{*}\right\Vert^{2}.
\end{equation}
Summing the above inequality over $t \in [T]$ then gives
\begin{align}
\label{eq:omgd-proof-6}
\sum_{t=1}^{T} f_{t}(\widehat{\mathbf{x}}_{t}) - \sum_{t=1}^{T}\min_{\mathbf{x} \in \mathcal{X}} f_{t}(\mathbf{x})
&\leq \frac{1}{2\alpha}\sum_{t=1}^{T}\left\Vert\nabla f_{t}(\widehat{\mathbf{x}}_{t}^{*})\right\Vert^{2}
	 + \frac{L_{f}+\alpha}{2}\sum_{t=1}^{T}\left\Vert\widehat{\mathbf{x}}_{t} - \widehat{\mathbf{x}}_{t}^{*}\right\Vert^{2} \\ \nonumber
&\leq \frac{G_{T}^{*}}{2\alpha}
	 + \frac{L_{f}+\alpha}{2}\sum_{t=1}^{T}\left\Vert\widehat{\mathbf{x}}_{t} - \widehat{\mathbf{x}}_{t}^{*}\right\Vert^{2},
\end{align}
for any $\alpha > 0$.

The next step is to bound the expression $\sum_{t=1}^{T}\Vert\widehat{\mathbf{x}}_{t} - \widehat{\mathbf{x}}_{t}^{*}\Vert^{2}$ from above. We begin by noting that
\begin{align}
\label{eq:omgd-proof-7}
\sum_{t=1}^{T}\left\Vert\widehat{\mathbf{x}}_{t} - \widehat{\mathbf{x}}_{t}^{*}\right\Vert^{2}
&\leq \left\Vert\widehat{\mathbf{x}}_{1} - \widehat{\mathbf{x}}_{1}^{*}\right\Vert^{2}
	  + 2\sum_{t=2}^{T}\left\Vert\widehat{\mathbf{x}}_{t} - \Pi_{\mathcal{F}_{t-1}^{*}}(\widehat{\mathbf{x}}_{t})\right\Vert^{2}
	  + 2\sum_{t=2}^{T}\left\Vert\Pi_{\mathcal{F}_{t}^{*}}(\widehat{\mathbf{x}}_{t}) - \Pi_{\mathcal{F}_{t-1}^{*}}(\widehat{\mathbf{x}}_{t})\right\Vert^{2} \\ \nonumber
&\leq \left\Vert\widehat{\mathbf{x}}_{1} - \widehat{\mathbf{x}}_{1}^{*}\right\Vert^{2}
	  + 2\sum_{t=2}^{T}\left\Vert\widehat{\mathbf{x}}_{t} - \Pi_{\mathcal{F}_{t-1}^{*}}(\widehat{\mathbf{x}}_{t})\right\Vert^{2}
	  + 2\mathcal{S}_{T}^{*},
\end{align}
where the first inequality follows from the triangle inequality, while the second follows from the definition of $\mathcal{S}_{T}^{*}$ in \eqref{eq:generalized-path-length-measures}. Next, by applying Lemma \ref{main-result} to the OMGD updates in \eqref{eq:omgd-update}, we obtain
\begin{equation}
\label{eq:omgd-proof-8}
\left\Vert\widehat{\mathbf{z}}_{t-1}^{(i+1)} - \Pi_{\mathcal{F}_{t-1}^{*}}(\widehat{\mathbf{z}}_{t-1}^{(i+1)})\right\Vert^{2}
\leq \left(1 - \frac{\lambda_{\ell}}{\lambda_{\ell} \ + \ \eta^{-1}}\right)\left\Vert\widehat{\mathbf{z}}_{t-1}^{(i)} - \Pi_{\mathcal{F}_{t-1}^{*}}(\widehat{\mathbf{z}}_{t-1}^{(i)})\right\Vert^{2},
\quad i \in [m],
\end{equation}
whence
\begin{align}
\label{eq:omgd-proof-9}
\left\Vert\widehat{\mathbf{x}}_{t} - \Pi_{\mathcal{F}_{t-1}^{*}}(\widehat{\mathbf{x}}_{t})\right\Vert^{2}
&= \left\Vert\widehat{\mathbf{z}}_{t-1}^{(m+1)} - \Pi_{\mathcal{F}_{t-1}^{*}}(\widehat{\mathbf{z}}_{t-1}^{(m+1)})\right\Vert^{2} \\ \nonumber
&\leq \left(1 - \frac{\lambda_{\ell}}{\lambda_{\ell} \ + \ \eta^{-1}}\right)^{m}\left\Vert\widehat{\mathbf{x}}_{t-1} - \Pi_{\mathcal{F}_{t-1}^{*}}(\widehat{\mathbf{x}}_{t-1})\right\Vert^{2}  \\ \nonumber
&\leq \frac{1}{4}\left\Vert\widehat{\mathbf{x}}_{t-1} - \widehat{\mathbf{x}}_{t-1}^{*}\right\Vert^{2},
\end{align}
where the second inequality is due to our choice of $m = \lceil\frac{\lambda_{\ell} \ + \ \eta^{-1}}{\lambda_{\ell}} \mathrm{ln}(4)\rceil$ and the definition \eqref{eq:omgd-proof-1}. Combining \eqref{eq:omgd-proof-7} and \eqref{eq:omgd-proof-9} gives
\begin{align}
\label{eq:omgd-proof-10}
\sum_{t=1}^{T}\left\Vert\widehat{\mathbf{x}}_{t} - \widehat{\mathbf{x}}_{t}^{*}\right\Vert^{2}
&\leq \left\Vert\widehat{\mathbf{x}}_{1} - \widehat{\mathbf{x}}_{1}^{*}\right\Vert^{2}
	  + \frac{1}{2}\sum_{t=2}^{T}\left\Vert\widehat{\mathbf{x}}_{t-1} - \widehat{\mathbf{x}}_{t-1}^{*}\right\Vert^{2}
	  + 2\mathcal{S}_{T}^{*} \\ \nonumber
&\leq \left\Vert\widehat{\mathbf{x}}_{1} - \widehat{\mathbf{x}}_{1}^{*}\right\Vert^{2}
	  + \frac{1}{2}\sum_{t=1}^{T}\left\Vert\widehat{\mathbf{x}}_{t} - \widehat{\mathbf{x}}_{t}^{*}\right\Vert^{2}
	  + 2\mathcal{S}_{T}^{*}.
\end{align}
Rearranging terms, we obtain
\begin{equation}
\label{eq:omgd-proof-11}
\sum_{t=1}^{T}\left\Vert\widehat{\mathbf{x}}_{t} - \widehat{\mathbf{x}}_{t}^{*}\right\Vert^{2}
\leq 4\mathcal{S}_{T}^{*} + 2\left\Vert\widehat{\mathbf{x}}_{1} - \widehat{\mathbf{x}}_{1}^{*}\right\Vert^{2}.
\end{equation}
Using this result in \eqref{eq:omgd-proof-6}, we conclude that
\begin{equation}
\label{eq:omgd-proof-12}
\sum_{t=1}^{T} f_{t}(\widehat{\mathbf{x}}_{t}) - \sum_{t=1}^{T}\min_{\mathbf{x} \in \mathcal{X}} f_{t}(\mathbf{x})
\leq \frac{G_{T}^{*}}{2\alpha} + 2(L_{f}+\alpha)\mathcal{S}_{T}^{*} + (L_{f}+\alpha)\left\Vert\widehat{\mathbf{x}}_{1} - \widehat{\mathbf{x}}_{1}^{*}\right\Vert^{2},
\quad \forall \alpha > 0.
\end{equation}

Finally, we show that the dynamic regret can still be bounded by $\mathcal{P}_{T}^{*}$. From \eqref{eq:omgd-proof-9}, we have
\begin{equation}
\label{eq:omgd-proof-13}
\left\Vert\widehat{\mathbf{x}}_{t} - \Pi_{\mathcal{F}_{t-1}^{*}}(\widehat{\mathbf{x}}_{t})\right\Vert
\leq \frac{1}{2}\left\Vert\widehat{\mathbf{x}}_{t-1} - \widehat{\mathbf{x}}_{t-1}^{*}\right\Vert.
\end{equation}
We can then choose the uniclass loss function such that $\rho = 1/2$ in \cref{main-result-corollary} to obtain
\begin{equation}
\label{eq:omgd-proof-14}
\sum_{t=1}^{T} f_{t}(\widehat{\mathbf{x}}_{t}) - \sum_{t=1}^{T} \min_{\mathbf{x} \in \mathcal{X}} f_{t}(\mathbf{x})
\leq 2K_{f}\mathcal{P}_{T}^{*} + 2K_{f}\left\Vert\widehat{\mathbf{x}}_{1} - \widehat{\mathbf{x}}_{1}^{*}\right\Vert,
\end{equation}
which completes the proof.

\end{document}